\newcommand\hl[1]{``#1''}
\newcommand\n[1]{\bar{#1}}
\newtheorem{thm}{Theorem}
\newtheorem{corollary}{Corollary}
\newtheorem{definition}{Definition}
\newcommand\shrink[1]{}
\def\pr{{\it Pr}}
\def\e{{\bf e}}
\def\pa{{\bf p}}
\def\U{{\bf U}}
\def\u{{\bf u}}
\def\V{{\bf V}}
\def\X{{\bf X}}
\def\x{{\bf x}}
\def\Y{{\bf Y}}
\def\y{{\bf y}}
\def\Z{{\bf Z}}
\def\z{{\bf z}}
\def\eql(#1,#2){{#1\!\!=\!#2}}
\def\eql(#1,#2){{#1\!=\!#2}}
\def\clap#1{\hbox to 0pt{\hss#1\hss}}
\def\VE{{\sc VE}}
\def\VEC{{\sc VEC}}
\newcommand\scalemath[2]{\scalebox{#1}{\mbox{\ensuremath{\displaystyle #2}}}}
\def\eql(#1,#2){{#1\!\!=\!\!#2}}
\def\PA{{\bf P}}
\def\pa{{\bf p}}
\def\FF{{\mathcal F}}
\def\GG{{\mathcal G}}
\def\HH{{\mathcal H}}
\def\cg{{\mathbb G}}
\def\cm{{\mathcal F}}
\def\scm{{\mathcal M}}
\def\ws{{\mathcal W}}
\def\ev{{\eta}}
\def\facs(#1){\FF_{#1}}
\def\facsp(#1){\accentset{\frown}{\FF}_{#1}}
\def\facsn(#1){\accentset{\smile}{\FF}_{#1}}
\def\cls{\mathtt{cls}}
\def\sep{\mathtt{sep}}
\def\vars{\mathtt{vars}}
\def\fvars{\mathtt{fvars}}
\def\fsum{\mathtt{fsum}}
\def\mes(#1,#2){{\mathscr{M}(#1,#2)}}
\title{Causal Inference Using Tractable  Circuits}
\author{%
 Adnan Darwiche \\
Computer Science Department \\
University of California, Los Angeles \\
\texttt{darwiche@cs.ucla.edu} 
}
\begin{document}

\maketitle

\begin{abstract}
The aim of this paper is to discuss a recent result which shows that probabilistic inference in the 
presence of (unknown) causal mechanisms can be tractable for models that have traditionally been viewed as intractable. 
This result was reported recently in~\cite{DarwicheECAI20b} to facilitate model-based supervised learning 
but it can be interpreted in a causality context as follows. One can compile a non-parametric causal graph into 
an arithmetic circuit that supports inference in time linear in the circuit size. The circuit is also non-parametric so it can be used to 
estimate parameters from data and to further reason (in linear time) about the causal graph parametrized by these estimates.
Moreover, the circuit size can sometimes be bounded even when the treewidth of the causal graph is not, leading to tractable inference 
on models that have been deemed intractable previously. This has been enabled by a new technique that can exploit 
causal mechanisms computationally but without needing to know their identities (the classical setup in causal inference). 
Our goal is to provide a causality-oriented exposure to these new results and 
to speculate on how they may potentially contribute to more scalable and versatile causal inference.
\end{abstract}

\vspace{-5mm}
\section{Introduction}

Tractable arithmetic circuits have been receiving an increased attention in AI and computer
science more broadly; see~\cite{neusys22} for a recent survey. These circuits represent real-valued functions and 
are called {\em tractable} because they allow one to answer some hard queries about these functions
through linear-time, feed-forward passes on the circuit structure. 
These circuits were initially compiled from
Bayesian networks as proposed in~\cite{DarwicheJACM03,kr/Darwiche02} to facilitate probabilistic reasoning. 
They were later learned from data, starting with~\cite{LowdD08}, and even handcrafted as initially proposed in~\cite{PoonD11}.
Traditional methods for exact probabilistic inference have a complexity which is exponential 
in {\em treewidth} (a graph-theoretic parameter that measures the model's connectivity). With the introduction of compiled circuits, 
one could practically do inference on models whose treewidth can be in the hundreds; see, e.g.,~\cite{ijcai/ChaviraD05,ijar/ChaviraDJ06,ai/ChaviraD08}. 
A recent comprehensive, empirical evaluation of at least a dozen probabilistic inference algorithms showed that methods based on circuits are
at the forefront in terms of efficiency~\cite{ijcai/AgrawalPM21}; see also~\cite{uai/DilkasB21}.
What stands behind the efficacy of circuit-based methods is their ability to aggresively exploit the parametric structure
of models such as functional dependencies and context-specific independence~\cite{DBLP:journals/corr/abs-1302-3562}.
This however has limited their utility in contexts where one does not know the model parameters, which is a
common case in causal inference. A circuit compilation method was recently proposed in~\cite{DarwicheECAI20b} which can exploit a 
more abstract form of parametric structure: functional dependencies (i.e., causal mechanisms) whose identities are unknown
which is the classical setup in causal inference.
This new method was able to compile circuits for models whose treewidth is very large
without needing to know the model parameters, leading to a major advance on earlier techniques. 
Our aim in this paper is to provide
an intuitive exposure to this new algorithm and its underlying techniques, while placing it in a causality context.
Our belief is that a discussion of these new results could lead to a synthesis on how they can further
advance causal inference in terms of scalability and versatility. 
We start first with a review of some core concepts in causality and circuit-based inference 
and then follow by a discussion of the new results and their significance.

\section{Causal Models and Queries}
\label{sec:cmq}

Variables are discrete and denoted by uppercase letters (e.g., \(X\))
and their values by lowercase letters (e.g., \(x\)). Sets of variables are denoted by boldface, uppercase letters (e.g., \(\X\)) and their
instantiations by boldface, lowercase letters (e.g., \(\x\)). We will write \(x \in \x\) to mean that \(x\) is the value of variable \(X\)
in instantiation \(\x\). For a binary variable \(X\), we will use \(x\) and \(\n x\) to denote \(\eql(X,1)\) and \(\eql(X,0)\), respectively.

We next define {\em Structural Causal Models (SCMs)} following the treatment in~\cite{primer16}; see also~\cite{pearl00b}.
\begin{definition}\label{def:scm}
An SCM is a tuple \(\scm = (\U,\V,\cm,\{\pr(U)\}_{U \in \U})\) where \(\U\) and \(\V\) are disjoint sets of variables called \hl{exogenous} and \hl{endogenous,} respectively;
\(\cm\) contains exactly one function \(f_V\) for each endogenous variable \(V\); and \(\pr(U)\) are distributions over exogenous variables \(U\). 
A function \(f_V\) is called a \hl{causal mechanism} and it determines the value of variable \(V\) based on two sets of inputs, \(\U_V \subseteq \U\) 
and \(\V_V \subseteq \V\). That is, the mechanism \(f_V\) is a mapping \(\U_V,\V_V \mapsto V\).
\end{definition}
The {\em causal graph} \(\cg\) of an SCM contains variables \(\U \cup \V\) as its nodes. 
It also contains edges \(X \!\!\rightarrow\! V\) for each endogenous variable \(V\)
and each variable \(X\) that is an input to function \(f_V\). We will only deal with SCMs that 
produce acyclic causal graphs (the inputs of a function cannot depend on its output).
Moreover, we will assume that exogenous variables are independent and that only endogenous variables can be observed.
A key observation about SCMs is that once we fix the values of exogenous variables, the values of all endogenous variables are also fixed by the causal mechanisms. 

\begin{wrapfigure}[6]{r}{0.12\textwidth}
\centering
  \includegraphics[width=\linewidth]{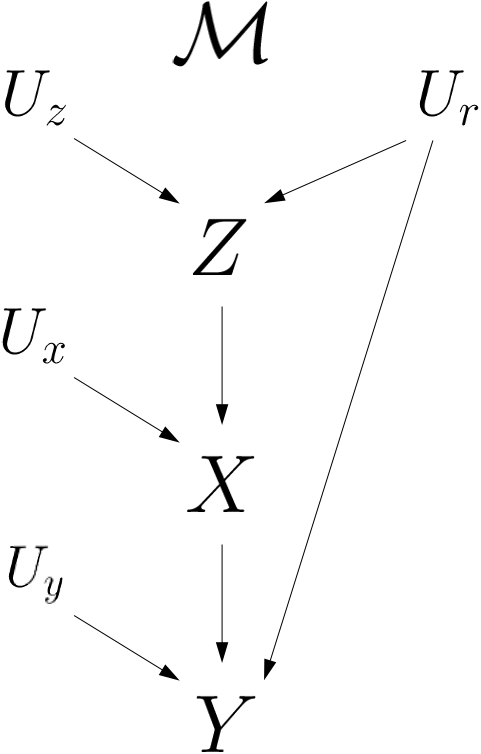}
  \caption*{\scriptsize causal graph}
\end{wrapfigure}
We will next consider an example SCM from~\cite{Bareinboim20211OP} where all variables are binary. 
The endogenous variables are \(\V = X, Y, Z\), representing 
a treatment, the outcome and the presence of hypertension, respectively. The exogenous variables are \(\U = U_r, U_x, U_y, U_z\),
representing natural resistance to disease (\(U_r\)) and sources of variation affecting endogenous variables (\(U_x, U_y, U_z\)).
The distributions of exogenous variables are
\(\pr(u_r) = 0.25\), \(\pr(u_x) = 0.9\), \(\pr(u_y) = 0.7\) and \(\pr(u_z) = 0.95\). 
The three causal mechanisms are given next and they lead to the causal graph on the right:
\begin{eqnarray}
f_X(Z,U_x)        & = &  z u_x  + \n z \n{u}_x \nonumber \\
f_Y(X,U_y,U_r) & = & x u_r + \n x u_y u_r + \n x \n{u}_y \n{u}_r \nonumber \\
f_Z(U_z,U_r)    & = & u_z u_r \nonumber
\end{eqnarray}

\begin{wrapfigure}[10]{r}{0.12\textwidth}
\centering
  \includegraphics[width=\linewidth]{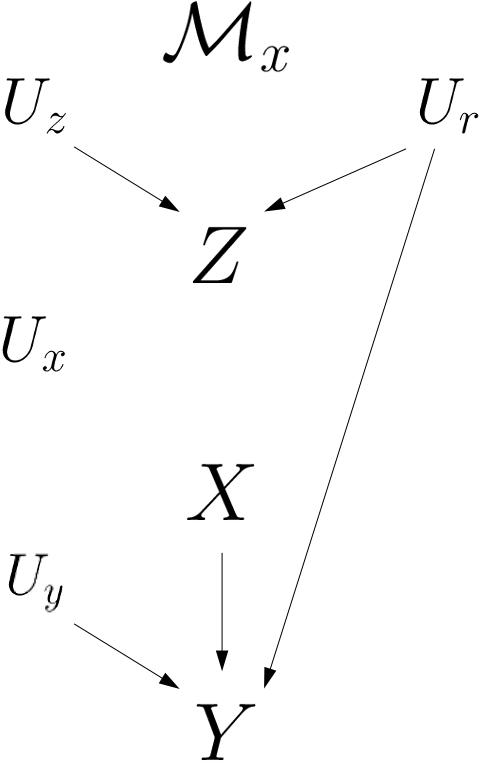}
  \caption*{\scriptsize sub-model}
\end{wrapfigure}
A key notion in causal inference is the {\em sub-model} \(\scm_\z\) of an SCM \(\scm\), where \(\z\) is an instantiation of some endogenous variables.
This is another SCM obtained from \(\scm\) by replacing the function for each variable \(Z \in \Z\) with the constant function \(f_Z = z\), where \(z \in \z\). 
Intuitively, the sub-model \(\scm_\z\) is used to reason about an {\em intervention} from outside the system modeled by \(\scm\), 
which suppresses the causal mechanisms of variables \(\Z\) and fixes the values of these variables to \(\z\).
For example, if we intervene to administer a treatment (\(x\)) in the above SCM \(\scm\), we get a sub-model \(\scm_x\) in which the mechanism for variable \(X\)
is replaced with the constant function \(f_X = x\). The causal graph of the resulting sub-model is shown on the right.

Three types of queries are normally posed on SCMs, which are referred to as {\em associational,} {\em interventional} and {\em counterfactual}
queries. They correspond to what is known as the {\em causal hierarchy}~\cite{pearl00b,pearl18,DBLP:journals/cacm/Pearl19},
where each class of queries belongs to a {\em rung}~\cite{pearl18} or {\em layer}~\cite{Bareinboim20211OP} in the hierarchy.
We next define the syntax and semantics of these queries, using a slightly different notation than is customary---this will allow us 
to provide a more uniform and general treatment of these queries.
 
\begin{definition}\label{def:events}
An \hl{observational event} has the form \(\x\) where \(\X\) is a set of endogenous variables.
An \hl{interventional event} has the form \(\y_\x\) where \(\X\) and \(\Y\) are sets of endogenous variables.
A \hl{counterfactual event} has the form \(\ev_1, \ldots, \ev_n\) where \(\ev_i\) is an observational or interventional event.
\end{definition}
An observational event \(\x\) says that variables \(\X\) took the value \(\x\). 
For example, a patient did not take the treatment and died (\(\n x, \n y\)).
An interventional event \(\y_\x\) says that variables \(\Y\) took the value \(\y\) after setting variables \(\X\) to \(\x\) by an intervention.
For example, a patient survived after they were given the treatment (\(y_x\)).
A counterfactual event is a conjunction of events where each could be observational or interventional.
For example, a patient who responds to treatment did not take it and died (\(y_x, \n x, \n y\)).

\begin{definition}\label{def:worlds}
A \hl{world} for SCM \(\scm\) is an instantiation of its exogenous variables. 
The worlds of an observational event \(\ws_\scm(\x)\) are those worlds that fix the values of variables \(\X\) to \(\x\).
The worlds of an interventional event \(\ws_\scm(\y_\x)\) are defined as \(\ws_{\scm_\x}(\y)\).
The worlds of a counterfactual event \(\ws_\scm(\ev_1, \ldots, \ev_n)\) are defined as \(\ws_\scm(\ev_1) \cap \ldots \cap \ws_\scm(\ev_n)\). 
\end{definition}

The following table, borrowed from~\cite{Bareinboim20211OP}, shows all sixteen worlds of the above SCM \(\scm\), 
together with their probabilities and the unique states they entail for endogenous variables.
\begin{center}
\includegraphics[width=.70\textwidth]{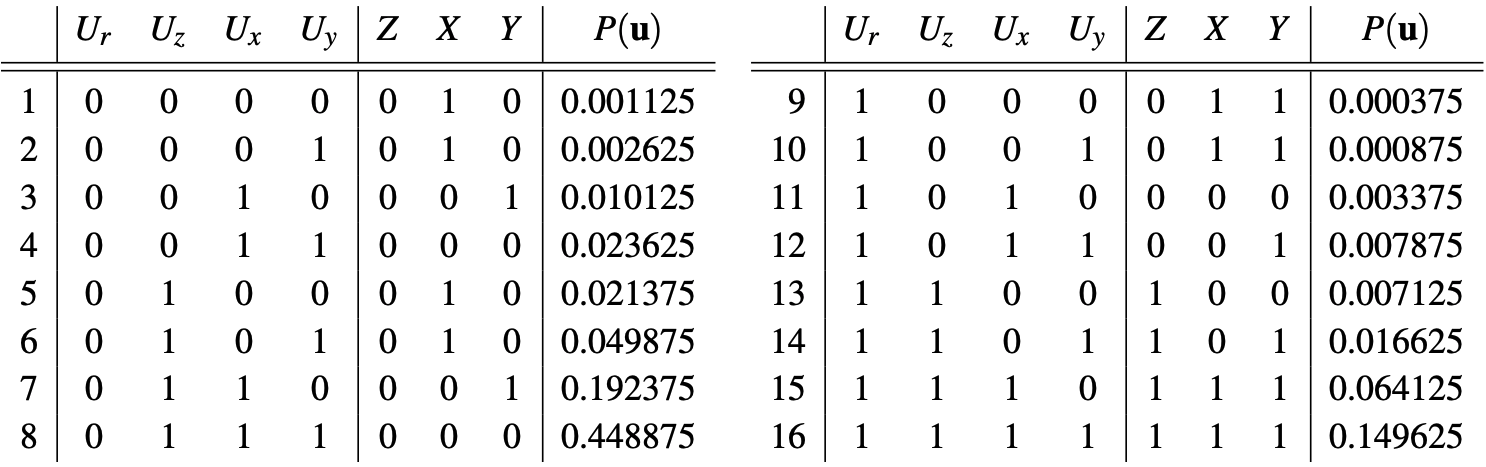}
\end{center}
Using this table and a similar one for sub-model \(\scm_x\), we obtain
 \(\ws_\scm(\n x, \n y) = \{\u_4,\u_8, \u_{11}, \u_{13}\}\) and
\(\ws_\scm(y_x) = \{\u_9,\ldots,\u_{16}\}\) which leads to
\(\ws_\scm(y_x, \n x, \n y) = \ws_\scm(\n x, \n y)  \cap \ws_\scm(y_x) =  \{\u_{11}, \u_{13}\}\).

We are now ready to define the probability of any SCM event.

\begin{definition} \label{def:queries}
Let \(\scm\) be an SCM with exogenous variables \(\U\) and distributions \(\pr(U)\) for \(U \in \U\).
The probability of event \(\ev\) with respect to SCM \(\scm\) is defined as:
\begin{equation}
\pr(\ev) = \sum_{\u \in \ws_\scm(\ev)} \pr(\u).
\label{eq:prq}
\end{equation}
\end{definition}
Hence, \(\pr(\n{x},\n{y}) = \pr(\u_4)+\pr(\u_8)+\pr(\u_{11})+\pr(\u_{13}) = 0.4830\) and
\(\pr(y_x, \n x, \n y) = 0.0105\).
We can now compute the probability that a patient who did not take the treatment and died would have been alive had they been given the treatment,
\(\pr(y_x | \n x, \n y) = \pr(y_x, \n x, \n y)  / \pr(\n{x},\n{y}) = 0.0217\). 
We will focus next on associational and interventional queries, leaving counterfactuals to future work. 

\section{Causal Inference Using Circuits}
\label{sec:ci}

As we just saw, one can answer sophisticated causal queries based on a fully specified SCM. 
However, such a model may not be available, particularly the identities of causal mechanisms and the distributions over exogenous variables.
What is more common is to have the causal graph of an underlying SCM in addition to observational data about endogenous variables.
A key task of causal inference is then to draw conclusions based on this limited input,
particularly about interventional probabilities which is the task we shall focus on. 
We will next show how this cross-layer inference can be realized using feed-forward circuits that are compiled from the non-parametric causal
graphs of SCMs. The discussion will reveal the significance of compiling the smallest possible circuit for a causal graph.
It will also motivate the new compilation algorithm in~\cite{DarwicheECAI20b} which is particularly relevant 
to the causal graphs of SCMs (in contrast to Bayesian networks). We shall discuss and study further this algorithm 
in Sections~\ref{sec:ve}-\ref{sec:vec} where we will also contribute to understanding its complexity.

\begin{wrapfigure}[10]{r}{0.40\textwidth}
\centering
\vspace{-3mm}
  \includegraphics[width=\linewidth]{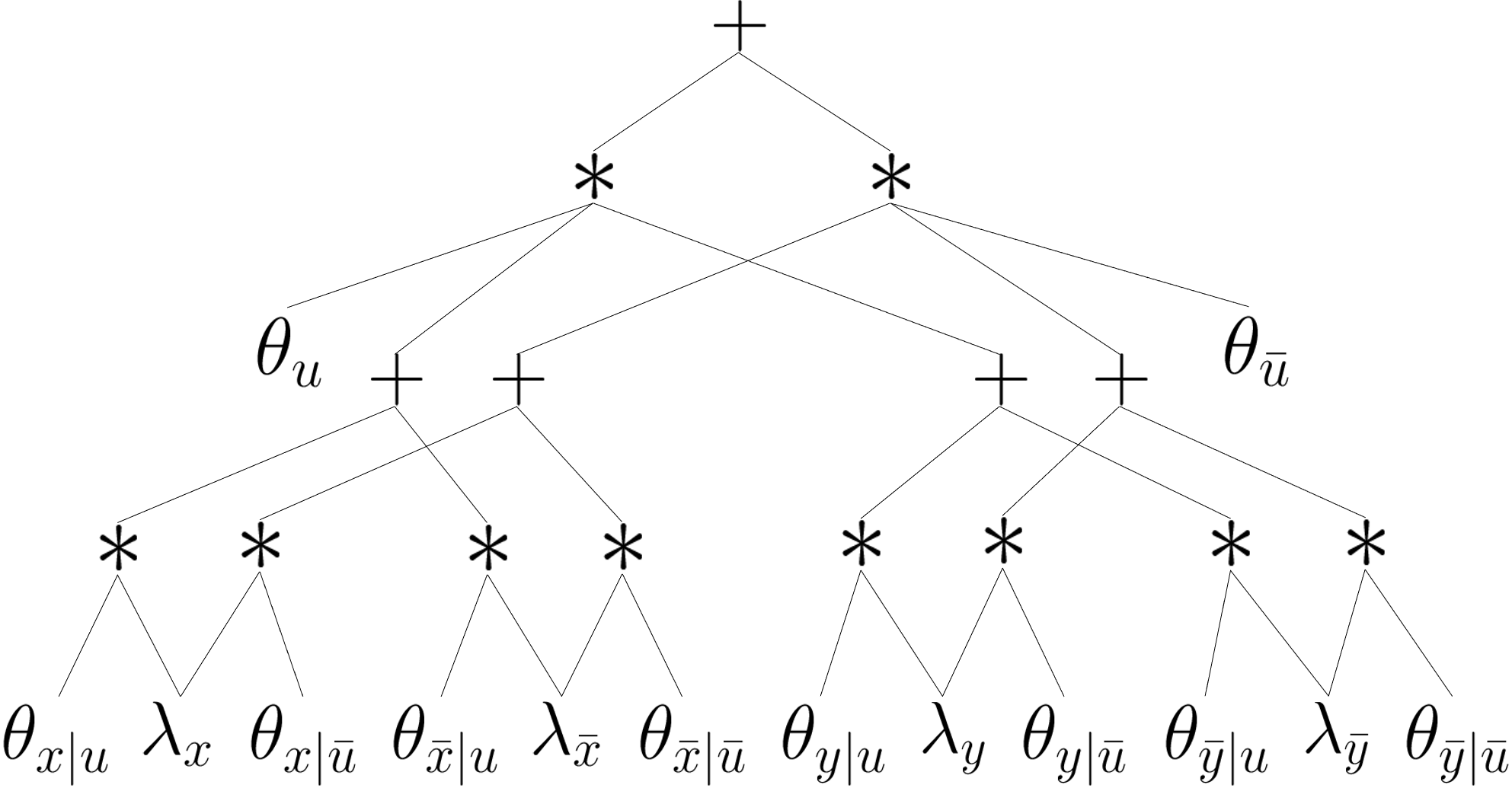}
  \caption*{\small arithmetic circuit (AC)}
\end{wrapfigure}
\textbf{The Circuits of Causal Graphs\ \ }
Consider the causal graph \(X \leftarrow U \rightarrow Y\) over binary variables and its compiled arithmetic 
circuit (AC) shown on the right.
This circuit has two types of inputs:
{\em symbolic parameters} \(\theta\) and {\em indicators} \(\lambda\). 
There are two parameters for exogenous variable \(U\) (\(\theta_u\) and \(\theta_{\n u}\)) which specify its distribution.
There are four parameters for endogenous variable \(X\) (\(\theta_{x|u}, \theta_{\n x | u}, \theta_{x | \n u}, \theta_{\n x | \n u}\))
which specify its causal mechanism. The remaining four parameters specify the mechanism for endogenous variable \(Y\). 
The indicators correspond to the values of endogenous variables. Variable \(X\) has indicators \(\lambda_x\) and \(\lambda_{\n x}\) and
variable \(Y\) has indicators \(\lambda_y\) and \(\lambda_{\n y}\). 
This circuit can compute the probability of any observational event \(\ev = \x\) in time linear in the circuit size. 
We simply set each indicator to \(1\) if its subscript is
compatible with the event \(\eta\), otherwise we set it to \(0\), and then evaluate the circuit~\cite{DarwicheJACM03}. 
For the event \(\ev = \n x\), the indicators are set to
\(\eql(\lambda_{x},0), \eql(\lambda_{\n x},1), \eql(\lambda_{y},1), \eql(\lambda_{\n y},1)\). 
If we (symbolically) evaluate the circuit under this indicator setting, we get 
\(\theta_u \theta_{\n x | u} (\theta_{y | u} + \theta_{\n y | u}) +  \theta_{\n u} \theta_{\n x | \n u} (\theta_{y | \n u} + \theta_{\n y | \n u})\) 
which is the expected result, \(\pr(\n x)\).
The circuit can also compute the probability of any interventional event \(\eta = \y_\x\) in time linear in the circuit size. 
This is remarkably simple as well. To compute \(\pr(\y_\x)\), known as the {\em causal effect,} we first
set the parameters of all variables in \(\X\) to \(1\) and then evaluate the circuit at instantiation \(\x,\y\).\footnote{This method
follows directly from the mutilation semantics of interventions~\cite[Section 1.3.1]{pearl00b} and the polynomial
semantics of arithmetic circuits~\cite{DarwicheJACM03,ChoiDarwiche17}. It is a 
slight variation on~\cite{ijcai/Qin15} which also emulates interventions by adjusting model parameters; see also~\cite{ijcai/WangLK21}.}
Suppose that \(\x = \n x\) and \(\y = \n y\) in our running example. To compute \(\pr({\n y}_{\n x})\),
we evaluate the circuit while setting the parameters for variable \(X\) to
\(\theta_{x|u} = \theta_{\n x | u} =  \theta_{x | \n u} =  \theta_{\n x | \n u} = 1\), and setting
the indicators to
\(\eql(\lambda_{x},0), \eql(\lambda_{\n x},1), \eql(\lambda_{y},0), \eql(\lambda_{\n y},1)\).
If we  (symbolically) evaluate the circuit under these settings, we get
\(\pr({\n y}_{\n x}) = \theta_u \theta_{\n y | u} + \theta_{\n u} \theta_{\n y | \n u} = \pr(\n y)\) 
which is the expected result  (\(X\) has no causal effect on \(Y\)).

\textbf{Exploiting Unknown Mechanisms\ \ }
Consider the endogenous variable \(X\) in the causal graph we just discussed and its 
parameters \(\theta_{x|u}, \theta_{\n x | u}, \theta_{x | \n u}, \theta_{\n x | \n u}\).
Since these parameters specify a causal mechanism, they must all be in \(\{0,1\}\) subject to the
constraints \(\theta_{x|u}+ \theta_{\n x | u}=1\) and \(\theta_{x | \n u}+ \theta_{\n x | \n u}=1\).
The main contribution of the new compilation algorithm in~\cite{DarwicheECAI20b} is that it can exploit these constraints---without needing to know
the specific values of \(\theta_{x|u}, \theta_{\n x | u}, \theta_{x | \n u}, \theta_{\n x | \n u}\)---to produce
circuits whose size can be exponentially smaller compared to methods
developed during the last two decades; see,~e.g.,~\cite{DarwicheJACM03,ijcai/ChaviraD05,Chavira.Darwiche.Ijcai.2007,ecsqaru/ChoiKD13,ShenCD16}
and~\cite[Chapters 12,13]{Darwiche09}. 
These earlier methods can exploit functional dependencies computationally but only if they know the specific values of parameters.
This does not help in a causality context (or a learning context more generally) where we
do not know these values.  
The details of how the algorithm does this are discussed in Sections~\ref{sec:ve}-\ref{sec:vec}.
\begin{wrapfigure}[6]{r}{0.4\textwidth}
\centering
\vspace{-3mm}\includegraphics[width=1\linewidth]{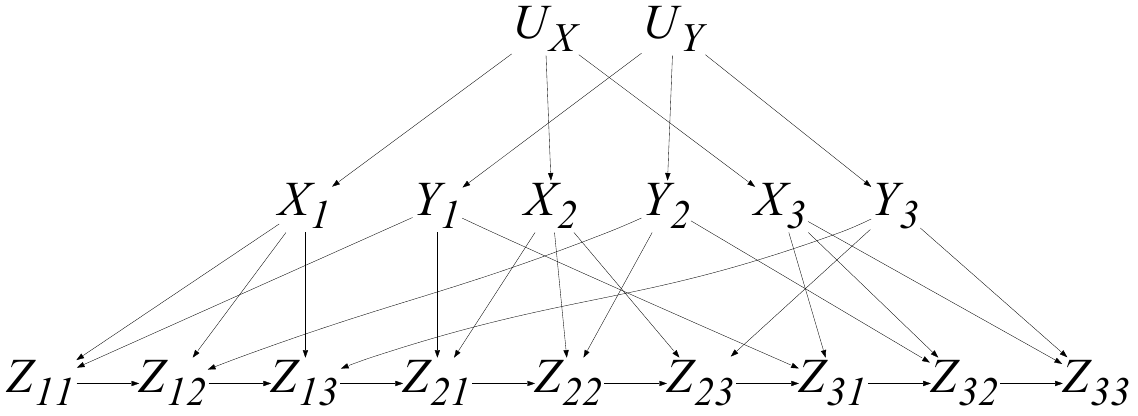}
  \caption*{}
\end{wrapfigure}
For now, consider the family of causal graphs on the right which generally has variables
\(U_X, U_Y, X_i, Y_j, Z_{ij}\) for \(i, j = 1, \ldots, n\). These models have treewidth \(\geq n+1\)
and hence are not accessible to non-parametric inference methods as they take time exponential in \(n\).
Moreover, the causal effect of \(X_1\) on \(Z_{nn}\) has the only back-door \(X_2, \ldots, X_n\).
By exploiting (unknown) causal mechanisms, we can now compile these causal
graphs into circuits of size \(O(n^2)\) (linear in the number of variables), allowing us
to compute associational and interventional queries in \(O(n^2)\) time. We will say more about this model and back-doors later.

\textbf{Cross-Layer Inference\ \ }
We next show how to perform cross-layer causal inference using circuits. 
In a nutshell, we will use the circuit to estimate maximum-likelihood parameters for both exogenous and endogenous variables in the causal graph.
We will then plug the estimates into the circuit and use it to compute associational and interventional probabilities in time linear in the circuit size
as shown earlier.
Suppose \(\V\) is the set of observed endogenous variables.
Since the causal graph has hidden variables, the maximum-likelihood parameters are not unique so they are not identifiable. 
However, the distribution \(\pr(\V)\) is identifiable in this case. That is, even though we may have
multiple maximum-likelihood estimates, they all lead to the same distribution \(\pr(\V)\).\footnote{Ying Nian Wu provided 
the following argument for infinite data. Let \(\pr_D(\V)\) be the data distribution
and \(\pr_\theta(\U,\V)\) be the model so \(\pr_\theta(\V) = \sum_{\U} \pr_\theta(\U,\V)\). 
Maximum-likelihood estimation is equivalent to minimizing the 
KL divergence \(KL(\pr_D(\V) | \pr_\theta(\V))\). If \(\theta\) is not identifiable, then all solutions of \(\theta\) belong to an equivalence 
class that minimizes the KL divergence and they all give the same marginal \(\pr_\theta(\V)\).}
This approach will need to be applied carefully though as its validity depends on (1)~the specific query of interest,
(2)~the set of observed variables and (3)~the causal graph structure. We will discuss this
in detail after elaborating further on the estimation of maximum-likelihood parameters.

\textbf{Estimation\ \ }
Since each example in a dataset corresponds to an observational event,
one can use arithmetic circuits to compute the likelihood function by simply evaluating
the circuit at each example (in linear time) and then multiplying the results. 
The algorithm in~\cite{DarwicheECAI20b} facilitates this computation further as it compiles causal graphs into circuits in the form of {\em tensor graphs.}
These are computation graphs in which nodes represent tensor operations instead of arithmetic operations, allowing one 
to evaluate and differentiate the circuit significantly more efficiently (think of a tensor operation as doing a bulk of arithmetic 
operations in parallel). 
Tensor graphs can lead to orders of magnitude speedups in estimation and inference time, especially that they allow batch (parallel) processing
of examples---see~\cite{DarwicheECAI20b,DBLP:PGM20a}
which compiled circuits with tens of millions of nodes, leading to evaluation times in milliseconds.
Backpropagation on arithmetic circuits takes time linear in the circuit size. Moreover, the partial derivatives
with respect to circuit parameters correspond to marginals over families in the causal graph (nodes and their parents)~\cite{DarwicheJACM03}, 
which is all that one needs to compute parameter updates for the EM algorithm; see, for example,~\cite[Eq.~17.7]{Darwiche09}.
Hence, one can use methods such as gradient descent and EM to seek maximum-likelihood estimates, but
the efficacy of these methods needs further investigation under the stated conditions including finite data.

\textbf{Identifiability\ \ }
A central question in causal inference is whether interventional probabilities can be identified based on a causal graph and
(infinite) observational data on the endogenous variables~\(\V.\)
Intuitively, identifiability means that interventional probabilities can be computed using any parameterization of the causal graph (or circuit)
that yields the true marginal distribution \(\pr(\V)\); see~\cite[Def~3.2.4]{pearl00b} for a formal definition.
A well behaved case arises when each exogenous variable feeds into at most one causal mechanism. 
These models are said to be {\em Markovian} and the case is termed {\em no unobserved confounders.}
Interventional probabilities are always identifiable for Markovian models, so we can always compute the causal effect \(\pr(\y_\x)\) for
these models using circuits parameterized by maximum-likelihood parameters.\footnote{See~\cite[Corollary 4]{docalculus} for a weaker, necessary
condition that guarantees identifiability of all causal effects.}
\begin{wrapfigure}[5]{r}{0.15\textwidth}
\centering
\vspace{-2mm}  \includegraphics[width=\linewidth]{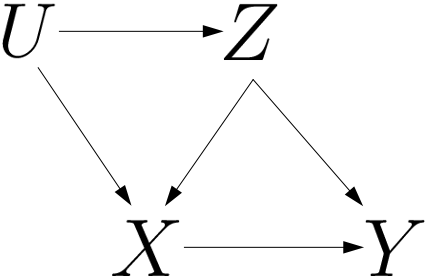}
  \caption*{}
\end{wrapfigure}
If an exogenous variable feeds into more than one causal mechanism, the model is said to be {\em semi-Markovian}.
In this case, interventional probabilities are identifiable only when certain conditions are met. 
The causal graph on the right corresponds to a semi-Markovian model since \(U\) feeds into the causal mechanisms for both \(X\) and \(Z\).
The causal effect \(\pr(y_x)\) is identifiable since \(\pr(y_x) = \sum_z \pr(y | x, z) \pr(z)\).
However, the causal effect \(\pr(x_z)\) is not identifiable as it cannot be uniquely determined
based on the causal graph and the distribution \(\pr(X,Y,Z)\).
The {\em do-calculus} provides a complete and efficient characterization of identifiable, interventional probabilities
based on observational data~\cite{do-calculus-95,aaai/TianP02,docalculus}; see also~\cite{aaai/HuangV06}.\footnote{See 
also~\cite{uai/LeeCB19} for a treatment of identifiability based on both observational and interventional data and~\cite{pnas/BareinboimP16} 
for a survey that considers other tasks such as the evaluation of soft interventions.}
It is based on a set of rules that can be used to transform an interventional probability into a formula that includes only 
associational probabilities (we will call this an {\em identifiability formula}). 
If the rules fail to make such a derivation, then the interventional probability is not identifiable. 
Other methods such as {\em back-door}~\cite{backdoor} and {\em front-door}~\cite{uai/PearlR95} provide simpler but incomplete
tests and lead to simple identifiability formulas.
For example, the back-door and front-door formulas have the forms \(\pr(\y_\x) = \sum_\z \pr(\y|\x,\z)\pr(\z)\) and
\(\pr(\y_\x) = \sum_\z \pr(\z|\x)\sum_{\x'} \pr(\y|\x',\z)\pr(\x')\), where \(\Z\) is called a back-door or front-door, respectively.
A more refined identifiability test that utilizes context-specific independence relations was proposed recently~\cite{nips/TikkaHK19}, 
thus expanding the reach of cross-layer causal inference. 
These relations correspond to equating certain parameters and can be integrated into circuits~\cite{ijcai/ChaviraD05}
to reduce the number of estimands.
In summary, we can compute causal effects on semi-Markovian models using circuits parameterized by maximum-likelihood estimates,
but only for identifiable queries as licensed by the do-calculus or a more refined identifiability procedure.  
 
\textbf{Identifiability Formulas vs Circuits\ \ }
Most identifiability procedures yield formulas (also called the {\em effect estimand}) which play three roles:
they provide a proof of identifiability; they point to endogenous variables whose measurement guarantees identifiability; 
and they allow one to evaluate the causal effect by estimating quantities that populate such formulas.
Back-door and front-door formulas have simple forms (albeit exponential sums) but
the do-calculus and the procedure in~\cite{nips/TikkaHK19} may generate identifiability formulas that are much more complex. 
Some procedures do not even aim to produce identifiability formulas; e.g.,~\cite{jair/Halpern00}.
To evaluate causal effects using circuits, one only needs an identifiability test not a formula.
That is, one estimates circuit parameters only once and then uses the parametrized circuit
to answer any identifiable query in time linear in the circuit size---regardless of how complex the identifiability
formula may be and without needing to have access to one.\footnote{Using circuits in this manner requires fixing the cardinality
of variables including exogenous ones.}
Additional knowledge such as context-specific independence and known mechanisms can be directly integrated
into the circuit~\cite{ijcai/ChaviraD05}, which can only improve the quality of estimates under finite data. 
At its core, this use of circuits amounts to computing causal effects based on the classical method of mutilating causal graphs,
armed by an observation and an advance. The observation is that using maximum-likelihood parameters is sound
if the causal effect is identifiable. The advance is that we can now perform this computation much more efficiently due to
exploiting unknown mechanisms. 

\textbf{The Circuit Compilation Process\ \ }
We will next discuss the circuit compilation algorithm introduced recently in~\cite{DarwicheECAI20b}. 
We will focus on the key insights behind the algorithm and slightly adjust it to suit our current objectives (the original algorithm 
targetted specific queries as is typically demanded in a supervised learning setting).
In a nutshell, the algorithm is based on the classical algorithm of {\em variable elimination} (\VE) with two
exceptions. First, we will use \VE\ symbolically by working with symbolic
parameters instead of numeric ones. Second, we will empower \VE\ by two new theorems based
on unknown causal mechanisms which can reduce its complexity exponentially.
We will review VE in Section~\ref{sec:ve} and then present the new 
theorems and compilation algorithm in Sections~\ref{sec:cm} and~\ref{sec:vec}.

\section{The Variable Elimination Algorithm (\VE)}
\label{sec:ve}

\VE\ operates on causal graphs which are parameterized by factors.
A {\em factor} over variables \(\X\) is a function \(f(\X)\) that maps each instantiation \(\x\) into a number \(f(\x)\).
For each node \(X\) and its parents \(\PA\) in the
\begin{wrapfigure}[5]{r}{0.4\textwidth}
\centering
\vspace{-5mm}
\begin{center}
\scalebox{0.77}{
\(
\begin{array}{c|rl}
U & f(U) & \\ \hline
u_0 & 0.3 & \theta_{u_0} \\
u_1 & 0.1 & \theta_{u_1} \\
u_2 & 0.6 & \theta_{u_2}  
\end{array}
\quad
\begin{array}{cc|cc}
X & Y & g(XY) & \\ \hline
x_0 & y_0 & 0 & \theta_{y_0 | x_0} \\
x_0 & y_1 & 1 & \theta_{y_1 | x_0} \\
x_1 & y_0 & 1 & \theta_{y_0 | x_1} \\
x_1 & y_1 & 0 & \theta_{y_1 | x_1} 
\end{array}
\)
}
\end{center}
  \caption*{}
\end{wrapfigure}
causal graph, we need a factor \(f_X(X,\PA)\) where \(f_X(x,\pa) = \pr(x|\pa)\).
For example, factor \(f(U)\) on the right specifies 
the distribution \(\pr(U)\) for exogenous variable \(U\) and factor \(g(XY)\)
specifies the mechanism for endogenous variable \(Y\): \(x_0 \mapsto y_1\), \(x_1 \mapsto y_0\).
\VE\ is based two factor operations: multiplication and sum-out.
The {\em product} of factors \(f(\X)\) and \(g(\Y)\) is another factor \(h(\Z)\),
where \(\Z = \X \cup \Y\) and \(h(\z) = f(\x)g(\y)\) for the unique instantiations \(\x\) and \(\y\) that are compatible with instantiation~\(\z\). 
{\em Summing-out} variables \(\Y \subseteq \X\) from factor \(f(\X)\) yields another factor \(g(\Z)\),
where \(\Z= \X \setminus \Y\) and \(g(\z) = \sum_\y f(\y\z)\). 
We use \(\sum_\Y f\) to denote the resulting factor~\(g\). 
 
The {\em joint distribution} of a parametrized causal graph is simply the product of its factors.
The causal graph in Figure~\ref{fig:jt}(b)  has factors \(f_A(A)\), \(f_B(AB)\), \(f_C(AC)\), \(f_D(BCD)\) and \(f_E(CE)\).
Its joint distribution is \(\pr(ABCDE) = f_A f_B f_C f_D f_E\).
To record an observation \(\eql(X,x)\), we use an auxiliary {\em evidence factor} \(\lambda_X(X)\) 
with \(\lambda_X(x)=1\) and \(\lambda_X(x')=0\) for \(x' \neq x\). 
A {\em posterior distribution} is obtained by normalizing the product of all factors in the causal graph
including evidence factors. 
Suppose we have evidence \(\e\) on variables \(A\) and \(E\) in Figure~\ref{fig:jt}(b). 
The posterior \(\pr(D | \e)\) is obtained by evaluating then normalizing the expression
\(
\sum_{ABCE} \lambda_A \lambda_E f_A f_B f_C f_D f_E.
\)
\VE\ tries to evaluate such expressions  
efficiently~\cite{zhangJAIR96a,dechterUAI96} based on two theorems; see, e.g., \cite[Chapter 6]{Darwiche09}.

The first theorem allows us to sum out variables in any order.
The second theorem allows us to pull out factors from sums, which can lead to exponential savings in time and space.
\begin{thm}\label{theo:ve0}
\(\sum_{\X\Y} f = \sum_\X \sum_\Y f = \sum_\Y \sum_\X f\).
\end{thm}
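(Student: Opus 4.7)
The plan is to unfold the definition of sum-out and reduce both equalities to associativity and commutativity of finite sums of real numbers. Let $\W$ denote the variables of $f$ that lie outside $\X \cup \Y$ (assuming, as is implicit in the notation $\sum_{\X\Y}$, that $\X$ and $\Y$ are disjoint). Then each of the three expressions is a factor over $\W$, so I would prove the equalities pointwise: fix an arbitrary instantiation $\w$ of $\W$ and show all three factors assign it the same value.

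For the first equality, I would expand the left-hand side using the definition of sum-out given in the paper:
\[
\left(\sum_{\X\Y} f\right)(\w) \;=\; \sum_{\u} f(\u, \w),
\]
where $\u$ ranges over instantiations of $\X \cup \Y$. Because $\X$ and $\Y$ are disjoint, every such $\u$ decomposes uniquely as a pair $(\x,\y)$, so the single sum over $\u$ can be rewritten as the nested sum $\sum_{\x}\sum_{\y} f(\x,\y,\w)$ simply by reindexing. Re-folding the inner sum through the definition of sum-out yields $\sum_{\x}\bigl(\sum_{\Y} f\bigr)(\x,\w) = \bigl(\sum_{\X}\sum_{\Y} f\bigr)(\w)$, establishing the first equality.

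For the second equality, I would take the nested sum $\sum_{\x}\sum_{\y} f(\x,\y,\w)$ produced above and apply commutativity of addition over the finite index sets of instantiations of $\X$ and $\Y$ to obtain $\sum_{\y}\sum_{\x} f(\x,\y,\w)$. Re-folding this back into factor notation gives $\bigl(\sum_{\Y}\sum_{\X} f\bigr)(\w)$, completing the chain of identities.

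The main ``obstacle'' is not mathematical but purely bookkeeping: one must track carefully which variables each intermediate factor is defined over so that the recursive applications of the sum-out definition type-check. Once the pointwise viewpoint is adopted and the pair decomposition $\u \leftrightarrow (\x,\y)$ is made explicit, the result is immediate from elementary properties of finite sums.
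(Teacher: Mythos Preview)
Your argument is correct: unfolding the sum-out operation pointwise and appealing to the reindexing of a finite double sum (and commutativity of addition) is exactly how this identity is established. Note, however, that the paper does not actually supply a proof of this theorem; it is stated as a standard fact about variable elimination with a reference to \cite[Chapter~6]{Darwiche09}. Your write-up is the routine verification one would find there, so there is nothing to contrast.
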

\begin{thm}\label{theo:ve1}
If variables \(\X\) appear in factor \(f\) but not in factor \(g\), then \(\sum_\X f \cdot g = g \sum_\X f\).
\end{thm}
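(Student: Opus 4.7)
The plan is to verify the identity pointwise: since both sides are factors over the same set of variables, it suffices to show they assign the same value to every instantiation of that set. Let \(\Y\) denote the variables of \(g\) and let \(\W\) denote the variables of \(f\); by assumption \(\X \subseteq \W\) and \(\X \cap \Y = \emptyset\). Then both \(\sum_\X (f \cdot g)\) and \(g \cdot \sum_\X f\) are factors over \(\Z = (\W \cup \Y) \setminus \X\), so a pointwise check is well-defined.

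First I would unfold the left-hand side. Fix an instantiation \(\z\) of \(\Z\). By the definition of sum-out and then of factor product,
\[
\Bigl(\sum_\X f \cdot g\Bigr)(\z) \;=\; \sum_{\x} (f \cdot g)(\x, \z) \;=\; \sum_{\x} f(\x, \z_\W) \, g(\z_\Y),
\]
where \(\z_\W\) and \(\z_\Y\) are the unique restrictions of \((\x,\z)\) compatible with the variables of \(f\) and \(g\) respectively; the crucial point is that \(\z_\Y\) does not involve any variable in \(\X\) (because \(\X \cap \Y = \emptyset\)), and therefore the value \(g(\z_\Y)\) does not change as \(\x\) ranges over its instantiations.

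Since \(g(\z_\Y)\) is constant with respect to the summation index, I factor it out to obtain
\[
\sum_{\x} f(\x, \z_\W) \, g(\z_\Y) \;=\; g(\z_\Y) \sum_{\x} f(\x, \z_\W) \;=\; g(\z_\Y) \Bigl(\sum_\X f\Bigr)(\z_\W) \;=\; \Bigl(g \sum_\X f\Bigr)(\z),
\]
where the last equality is again the definition of factor product. Because \(\z\) was arbitrary, the two factors agree everywhere and are therefore equal.

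There is no real obstacle here; the only thing to be careful about is bookkeeping of which variables belong to which factor when writing the product \((f \cdot g)(\x,\z)\), so that the step where \(g(\z_\Y)\) is pulled out of the sum is rigorously justified by the hypothesis \(\X \cap \Y = \emptyset\). If desired, the statement of Theorem~\ref{theo:ve0} can be invoked in tandem to reorder summations, but it is not needed for this identity.
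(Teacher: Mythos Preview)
Your proof is correct: the pointwise unfolding of the sum-out and product definitions, followed by pulling the constant-in-\(\x\) term \(g(\z_\Y)\) outside the sum, is exactly the standard argument, and your bookkeeping on the variable sets is careful enough to justify each step.

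Note, however, that the paper does not actually prove this theorem. It is stated as a well-known foundation of variable elimination and attributed to the literature (specifically,~\cite[Chapter~6]{Darwiche09}); the paper's contribution lies in Theorems~\ref{theo:ve2} and~\ref{theo:ve3}, not in this background result. So there is no paper proof to compare against, but your argument is the one a reader would find in the cited reference.
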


Consider the expression \(\sum_{ABDE} f(ACE) f(BCD)\). A direct evaluation
multiplies the two factors to yield \(f(ABCDE)\) then sums out variables \(ABDE\). Using
Theorem~\ref{theo:ve0}, we can arrange the expression into \(\sum_{AE} \sum_{BD} f(ACE) f(BCD)\).
Using Theorem~\ref{theo:ve1}, we can arrange it further into \(\sum_{AE} f(ACE) \sum_{BD} f(BCD)\)
which is more efficient to evaluate. If we eliminate all variables using order \(\pi\), and if the largest
factor constructed in the process has \(w+1\) variables, then \(w\) is called the {\em width} of order \(\pi\).
The smallest width attained by any elimination order corresponds to the {\em treewidth} of the causal graph. 
The best time complexity that can be attained by \VE\ is 
\(O(n \exp(w))\), where \(n\) is the number of variables and \(w\) is the causal graph treewidth. 
This holds for any other non-parametric method known today (i.e., inference methods that do not exploit the graph 
parameters).

\section{Variable Elimination with Causal Mechanisms}
\label{sec:cm}

We next present two recent results that allow us to simplify expressions beyond what is permitted by 
Theorems~\ref{theo:ve0} and~\ref{theo:ve1}, leading to a tighter complexity based on what we shall
call the {\em causal treewidth.} 
We will use \(\FF\), \(\GG\), \(\HH\) to denote sets of factors, where each set is interpreted as a product of its factors.

\begin{definition}\label{def:mechanism}
A factor \(f(X,\PA)\) is said to be a \hl{mechanism} for variable \(X\) iff 
all numbers in the factor are in \(\{0,1\}\) and \(\sum_x f(x,\pa) = 1\) for every instantiation \(\pa\).
\end{definition}

\begin{thm}[\cite{DarwicheECAI20b}]\label{theo:ve2}
Let \(f\) be a mechanism for variable \(X\). If  \(f \in \GG\) and \(f \in \HH\), then
\(\GG \cdot \HH = \GG \sum_X \HH\). 
\end{thm}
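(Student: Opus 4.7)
The plan is to exploit two consequences of $f$ being a mechanism for $X$: first, $f$ is \emph{pointwise idempotent} ($f(x,\pa)^2 = f(x,\pa)$ because every value lies in $\{0,1\}$), so $f\cdot f = f$ as factors; second, for each fixed parent instantiation $\pa$ there is a unique value $x^\star(\pa)$ with $f(x^\star(\pa),\pa)=1$ and $f(x,\pa)=0$ for all other $x$, because $\sum_x f(x,\pa)=1$.

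First I would factor the mechanism out of both products by writing $\GG = \GG' \cdot f$ and $\HH = \HH' \cdot f$, where $\GG'$ and $\HH'$ denote the remaining factors in each set. Then $\GG \cdot \HH = \GG' \cdot \HH' \cdot f \cdot f$, and the idempotence of $f$ collapses this to $\GG' \cdot \HH' \cdot f$. On the other side, $\GG \sum_X \HH = \GG' \cdot f \cdot \sum_X (\HH' \cdot f)$. So it suffices to show $\GG' \cdot \HH' \cdot f = \GG' \cdot f \cdot \sum_X (\HH' \cdot f)$ as factors.

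Next, I would check this pointwise. Fix any instantiation of the variables other than $X$; in particular, this fixes a value $\pa$ for $\PA$, and I abbreviate $x^\star := x^\star(\pa)$. For any fixed value of $X$, the right-hand side contains the inner sum $\sum_{x'} \HH'(\ldots, x')\, f(x',\pa)$, and by the second property above only $x' = x^\star$ contributes, so the sum evaluates to $\HH'(\ldots, x^\star)$. Thus the right-hand side, as a function of $X$, is $\GG'(\ldots) \cdot f(X,\pa) \cdot \HH'(\ldots, x^\star)$, while the left-hand side is $\GG'(\ldots) \cdot \HH'(\ldots, X) \cdot f(X,\pa)$. If $X = x^\star$, then $f(X,\pa)=1$ and both sides equal $\GG'(\ldots) \cdot \HH'(\ldots, x^\star)$; if $X \neq x^\star$, then $f(X,\pa)=0$ and both sides vanish. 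Since this case split is exhaustive, the factors agree.

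The main obstacle is not depth but notational bookkeeping: I have to track which variables actually appear in $\GG'$ and $\HH'$, especially that $X$ may legitimately appear in $\HH'$ (otherwise the sum $\sum_X \HH' f$ would be trivial and the theorem would be vacuous), and that $\PA$ is fixed by the ambient instantiation so that $x^\star$ is well-defined. The identity does not follow from Theorems~\ref{theo:ve0} or~\ref{theo:ve1} alone because the shared factor $f$ obstructs the usual pull-out rule; it is precisely the mechanism hypothesis that both makes $f$ idempotent and causes the inner sum to collapse, and the proof reduces to verifying those two algebraic facts and composing them via the two-case argument above.
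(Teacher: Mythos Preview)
Your argument is correct. The two observations you isolate---idempotence of a $\{0,1\}$-valued factor and the collapse of $\sum_x f(x,\pa)\,\HH'(\ldots,x)$ to the single term at $x^\star(\pa)$---are exactly what the mechanism hypothesis delivers, and your case split on $X=x^\star$ versus $X\neq x^\star$ cleanly finishes the pointwise verification.

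Note, however, that the paper does not actually prove this theorem: it is stated with a citation to~\cite{DarwicheECAI20b} and used as a black box (only Theorem~\ref{theo:ctw} receives a proof sketch in this paper). So there is no in-paper argument to compare against; your proof is a self-contained and standard verification of the cited result.
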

According to this result, if a mechanism for \(X\) appears in both parts of a product, then
variable \(X\) can be summed out from one part without changing the value of the product.
This has a key corollary.

\begin{corollary}\label{coro:ve2}
Let \(f\) be a mechanism for  \(X\). If  \(f \in \GG\) and \(f \in \HH\), then
\(\sum_X \GG \cdot \HH = \left(\sum_X \GG\right) \left(\sum_X \HH\right)\). 
\end{corollary}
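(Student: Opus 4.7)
The plan is to deduce the corollary directly from Theorem~\ref{theo:ve2} together with Theorem~\ref{theo:ve1}, with no new combinatorial work required.

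First, I would start with the left-hand side \(\sum_X \GG \cdot \HH\) and apply Theorem~\ref{theo:ve2} to the product inside the sum. Since \(f\) is a mechanism for \(X\) and \(f \in \GG\), \(f \in \HH\) by hypothesis, Theorem~\ref{theo:ve2} gives \(\GG \cdot \HH = \GG \sum_X \HH\). Substituting, the expression becomes
\begin{equation*}
\sum_X \GG \cdot \HH \;=\; \sum_X \GG \cdot \left(\sum_X \HH\right).
\end{equation*}

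Next I would observe that the factor \(\sum_X \HH\) is obtained by summing \(X\) out of the product of factors in \(\HH\), so it does not mention the variable \(X\). This is exactly the hypothesis of Theorem~\ref{theo:ve1} with the outer \(\sum_X\), so I can pull \(\sum_X \HH\) out of the outer summation to obtain
\begin{equation*}
\sum_X \GG \cdot \left(\sum_X \HH\right) \;=\; \left(\sum_X \HH\right) \sum_X \GG \;=\; \left(\sum_X \GG\right)\left(\sum_X \HH\right),
\end{equation*}
using commutativity of the factor product in the last step. Chaining these equalities yields the claim.

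The only delicate point is justifying that \(\sum_X \HH\) really is a single factor whose variables exclude \(X\), so that Theorem~\ref{theo:ve1} applies. This follows immediately from the definition of sum-out: summing \(X\) out of the factor \(\prod_{h \in \HH} h\) produces a factor over the remaining variables. Hence there is no real obstacle in the proof; the work is essentially bookkeeping, and the substantive content has already been absorbed by Theorem~\ref{theo:ve2}.
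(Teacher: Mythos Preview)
Your proposal is correct and matches the paper's intent: the corollary is stated immediately after Theorem~\ref{theo:ve2} without an explicit proof, precisely because the derivation you give---apply Theorem~\ref{theo:ve2} inside the sum, then pull the \(X\)-free factor \(\sum_X \HH\) out via Theorem~\ref{theo:ve1}---is the obvious one-line justification the reader is expected to supply.
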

That is, if a mechanism for \(X\) appears in both parts of a product,
we can sum out variable \(X\) from the product by independently summing it out from each part.
This is a remarkable addition to the algorithm of variable elimination which has been under
study for a few decades now.
Corollary~\ref{coro:ve2} may appear unusable as it is predicated on
multiple occurrences of a mechanism whereas the factors of a causal graph
contain a single mechanism for each endogenous variable. This is where the second result comes in:
{\em replicating} mechanisms in a product does not change the product value. 

\begin{thm}[\cite{DarwicheECAI20b}]\label{theo:ve3}
For mechanism \(f\), if \(f \in \GG\), then  \(f \cdot \GG=\GG\).
\end{thm}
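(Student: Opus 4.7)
The plan is to prove Theorem~\ref{theo:ve3} by establishing that any mechanism factor is idempotent under multiplication, and then using commutativity/associativity of factor product to split the duplicate $f$ off from the rest of $\GG$.

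First I would observe the key property: for any mechanism $f(X,\PA)$, we have $f \cdot f = f$ pointwise. Indeed, since factor multiplication is defined entrywise on matching instantiations (as noted just before Theorem~\ref{theo:ve0}), the $(x,\pa)$-entry of $f \cdot f$ is $f(x,\pa)^2$. By Definition~\ref{def:mechanism} every such entry lies in $\{0,1\}$, and for numbers in $\{0,1\}$ we have $c^2 = c$. Hence $f \cdot f$ and $f$ agree on every entry and are therefore equal as factors.

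Next I would use this idempotence to finish. Since $\GG$ is interpreted as the product of its factors and $f \in \GG$, by commutativity and associativity of factor multiplication we can write $\GG = f \cdot \GG'$, where $\GG'$ is the product of the remaining factors in $\GG$ (possibly the empty product, which is the constant~$1$). Then
\begin{equation}
f \cdot \GG \;=\; f \cdot (f \cdot \GG') \;=\; (f \cdot f) \cdot \GG' \;=\; f \cdot \GG' \;=\; \GG,
\end{equation}
as desired.

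There is essentially no hard step: the only thing one must be careful about is the convention that a set of factors is identified with the product of its factors, and that this product is well-defined up to commutativity and associativity, so the rewrite $\GG = f \cdot \GG'$ is legitimate. The conclusion that multiplication and sum-out operations then behave as in Theorems~\ref{theo:ve0} and~\ref{theo:ve1} is already implicit in the preceding development. So the entire argument reduces to the one-line observation that $0$ and $1$ are the fixed points of squaring.
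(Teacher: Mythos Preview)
Your argument is correct: the idempotence $f\cdot f = f$ for a $\{0,1\}$-valued factor is exactly the point, and the rest is just associativity/commutativity of the factor product together with the convention that a set of factors denotes its product. Note, however, that the paper does not actually supply a proof of Theorem~\ref{theo:ve3}; it merely states the result and attributes it to~\cite{DarwicheECAI20b}. So there is no in-paper proof to compare against, but your derivation is the natural one and matches what any reader would reconstruct from Definition~\ref{def:mechanism}.
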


For an example that uses these theorems, 
consider the expression \(\alpha = \sum_X f(XY) g(XZ) h(XW)\). \VE\ has to multiply all three factors before 
summing out variable \(X\), leading to a factor over four variables \(XYZW\).
However, if factor \(f\) is a mechanism for variable \(X\), then we can replicate it by Theorem~\ref{theo:ve3}:
\(\alpha =  f(XY) g(XZ) f(XY) h(XW)\). Corollary~\ref{coro:ve2} then gives
\(\alpha = \sum_X f(XY) g(XZ) \sum_X f(XY) h(XW)\). Hence, we can now evaluate expression \(\alpha\)
without having to construct any factor over more than three variables. This technique can
more generally lead to exponential savings
since the size of a factor is exponential in the number of its variables.

We will refer to the extension of \VE\ with Theorems~\ref{theo:ve2} and~\ref{theo:ve3} 
as \VEC\ ({\bf V}ariable {\bf E}limination for {\bf C}ausality). 
We emphasize that these new theorems do not require the values of parameters (i.e., specific mechanisms).
They only need to know if a variables is functionally determined by its parents.  

\section{Compiling Causal Graphs Into Circuits}
\label{sec:vec}

\def\shrinka{{\sc shrink\_sep}}
\def\fsum{{\sc sum}}
\def\fvars{\V}

We next show how \VE/\VEC\ can be used {\em symbolically} to compile non-parametric causal
graphs into arithmetic circuits with symbolic parameters.
We will first show this concretely on a small example using \VE, 
then discuss a general compilation algorithm based on \VE\ and finally based on \VEC. 

\begin{wrapfigure}[6]{r}{0.465\textwidth}
\centering
\begin{center}
\vspace{-3mm}
\hspace{-2mm}
\scalebox{0.9}{
\(
\begin{array}{c|c}
U & f(U) \\ \hline
u & \theta_{u} \\
\n u & \theta_{\n u}
\end{array}
\quad
\begin{array}{cc|cc}
U & V & g(UV) & \\ \hline
u & v        & \theta_{v | u} \\
u & \n v    & \theta_{\n v | u} \\
\n u & v    & \theta_{v | \n u} \\
\n u & \n v & \theta_{\n v | \n u} 
\end{array}
\quad
\begin{array}{c|c}
V & h(V) \\ \hline
v & \lambda_{v} \\
\n v & \lambda_{\n v}
\end{array}
\)
}
\end{center}
  \caption*{}
\end{wrapfigure}
Consider the causal graph \(U \rightarrow V\) with binary variables. 
The parameters of this graph are given by the factors \(f(U)\) and \(g(UV)\) shown on the right. 
We also added an evidence factor for endogenous variable \(V\) as it will be measured. 
These factors have symbolic parameters instead of numeric ones.
We will further overload the \(+\) and \(*\) operators so they now construct circuit nodes instead of performing numeric operations. 
That is, each entry in the above factors can be viewed as a leaf circuit node. 
When multiplying, say, node \(\theta_u\) with node \(\theta_{v | u}\), we construct a circuit
node with \(*\) as its label and nodes \(\theta_u\), \(\theta_{v | u}\) as its children.
And similarly for addition.
We can now get a circuit for the causal graph by multiplying all its factors, including evidence factors, and then summing out all variables,
\(AC = \sum_{UV} f(U)g(UV)h(V)\). The resulting factor \(AC\) will have a single entry which contains the root of compiled circuit 
\(
 \lambda_{v}*\theta_{u}*\theta_{v | u} +
\lambda_{\n v}*\theta_{u}*\theta_{\n v | u}  +
 \lambda_{v}*\theta_{\n u}*\theta_{v | \n u} +
\lambda_{\n v}*\theta_{\n u}*\theta_{\n v | \n u}
\).
The equivalent expression \(AC = \sum_V h(V) \sum_U f(U)g(UV)\) gives the circuit
\(
 \lambda_{v} * (\theta_{u}*\theta_{v | u} + \theta_{\n u}*\theta_{v | \n u}) +
\lambda_{\n v} * (\theta_{u}*\theta_{\n v | u}  +\theta_{\n u}*\theta_{\n v | \n u})
\).
Hence, the size and shape of a compiled circuit depend on how we schedule factor operations (multiplication and sum-out).
The symbolic use of \VE\ to compile circuits was initially proposed in~\cite{Chavira.Darwiche.Ijcai.2007}. 
This method was recently refined in~\cite{DarwicheECAI20b} by 
(1)~{\em scheduling} factor operations based on a specific class of {\em binary jointrees}~\cite{DBLP:conf/uai/Shenoy96} and 
(2)~allowing one to compile circuits using \VEC\ by {\em thinning}
the jointree. We will explain this advance after a brief review of (binary) jointrees.

\begin{figure}[tb]
\centering
\includegraphics[width=.8\textwidth]{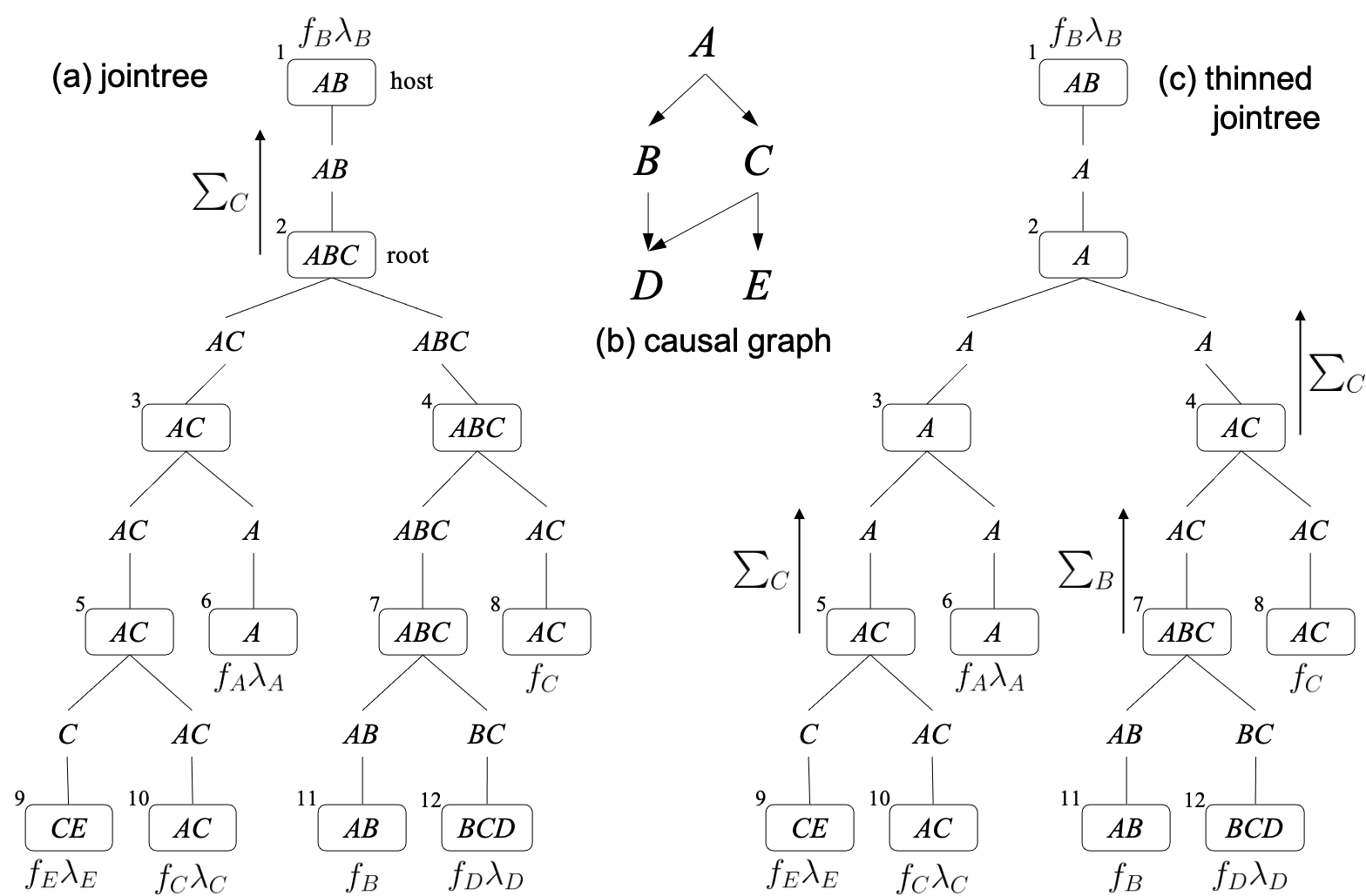}
\caption{A causal graph (middle) with a jointree (left) and a thinned jointree (right).
The mechanisms for variables \(B\) and \(C\), \(f_B\) and \(f_C\), are replicated twice in the jointrees.
\label{fig:jt}}
\end{figure}

\textbf{Jointrees\ \ }
A binary jointree is a tree in which each node is either a {\em leaf} (has a single neighbor) or {\em internal} (has three neighbors). 
We will require the leaf nodes to be in one-to-one correspondence with the factors of a causal graph,
including replicated factors but excluding evidence factors. As we show next, the topology of a binary jointree determines all its
properties, including the set of variables attached to each node, called a {\em cluster,} 
and the set of variables attached to each edge, called a {\em separator.}
Figure~\ref{fig:jt}(a) depicts a binary jointree for the causal graph in Figure~\ref{fig:jt}(b). 
Following the convention in~\cite{DarwicheECAI20b},
this jointree is layed out so that each internal node has two neighbors below it (children) and the third neighbor above it (parent).
The leaf nodes of this jointree are numbered \(1,6,8,9,10,11,12\) and correspond to the causal graph factors,
\(f_B, f_A, f_C, f_E, f_C, f_B, f_D\) (we replicated the factors for \(B\) and \(C\)).
The separator for edge \((i,j)\) between node \(i\) and its parent \(j\) is denoted \(\sep(i)\) and
contains variables that are shared between factors on both sides of the edge \((i,j)\). 
For example, \(\sep(3) = \{A,C\}\) as these are the variables shared 
between factors at leaves \(\{6,9,10\}\) and factors at leaves \(\{1,8,11,12\}\).
The cluster of node \(i\) is denoted \(\cls(i)\).
The cluster of a leaf node is the variables of its associated factor. The cluster of an internal node
is the union of separators connected to its children. 
For example, for leaf node \(9\) with factor \(f_E\), \(\cls(9) = \vars(f_E) = \{C,E\}\). 
Moreover, for internal node \(7\) with children \(11\) and \(12\), \(\cls(7) = \sep(11) \cup \sep(12) = \{A,B,C\} \).

\textbf{Scheduling\ \ }
Given a binary jointree, \VE\ (and later \VEC) schedules its operations as follows.
Visiting nodes bottom-up in the jointree, each node \(i\) computes a factor \(f(i)\) and sends it to its parent.
A leaf node \(i\) computes \(f(i)\) by projecting its associated factor on \(\sep(i)\). 
For example, \(f(9) = \sum_{E} f_E \lambda_E\).
An internal node \(i\) computes \(f(i)\) by multiplying the factors it receives from its children and then projecting the product on \(\sep(i)\).
For example, \(f(2) = \sum_C f(3) f(4)\).
This process terminates at the top leaf node \(r\) which multiplies
the factor it receives from its single child \(c\) with its own factor and then projects the product on the empty set.
In Figure~\ref{fig:jt}(a), the final computation is \(\sum_{AB} f_B \lambda_B f(2)\).
Denoting the factor at leaf node \(i\) by \(\FF_i\), this process yields the factor 
\(AC = \sum_{\cls(r)} \FF_r f(c)\), where
\[
\scalemath{0.91}{
f(i) = \sum_{\cls(i)\setminus\sep(i)} \FF_i  \quad\mbox{if \(i\) is leaf}; \quad
f(i) = \sum_{\cls(i)\setminus\sep(i)} f(c_1) f(c_2) \quad\mbox{if \(i\) has children \(c_1,c_2\).}
}
\]
The final factor \(AC\) has a single entry which contains the root of the compiled circuit as shown earlier.
Moreover, the size of this circuit is determined by the jointree clusters and separators. 
Each cluster/separator contributes a number of multiplication/addition nodes that is exponential in the cluster/separator size.
In a jointree, the largest cluster dominates the largest separator and the size
of the largest cluster minus \(1\) is called the jointree {\em width.} 
Furthermore, the smallest width attained by any jointree corresponds to the treewidth of the causal graph; see,~\cite[Chapter 9]{Darwiche09}. 
Hence, the complexity of this compilation method is exponential in the treewidth of the causal graph.

\textbf{Thinning\ \ }
This complexity was recently significantly improved by exploiting (unknown) causal mechanisms~\cite{DarwicheECAI20b}.
The basic idea is to {\em thin} the jointree by shrinking its separators (and hence clusters) while
maintaining the correctness of compiled circuit. The thinning process is based on
Theorems~\ref{theo:ve2} and~\ref{theo:ve3} and can lead to an exponential reduction in the circuit size.
To see the key insight behind this thinning process, consider node \(2\) in the jointree of Figure~\ref{fig:jt}(a).
The separators \(\sep(3)\) and \(\sep(4)\) of its children both contain variable \(C\).
Hence, the factors \(f(3)\) and \(f(4)\) sent by these children to node~\(2\) both contain variable \(C\).
Since \(C\) does not appear in \(\sep(2)\) it gets summed out at node \(2\) so it does not 
appear in the factor \(f(2)\) that this node sends to its parent.
However, since we have two replicas of the mechanism for variable \(C\) at leaf nodes \(8\) and \(10\)
(as licensed by Theorem~\ref{theo:ve3}), we can sum out \(C\) earlier, at nodes \(4\) and \(5\)
(as licensed by Theorem~\ref{theo:ve2}). This means that \(C\) can be removed from 
\(\sep(4)\), \(\sep(5)\) and also \(\sep(3)\). We can similarly sum out variable \(B\)
at node \(7\), which removes it from \(\sep(7)\), \(\sep(4)\) and \(\sep(2)\). The shrinking of 
separators causes clusters to shrink as well, leading to the thinned jointree in 
Figure~\ref{fig:jt}(c) and a corresponding smaller circuit compilation.

\begin{figure}[tb]
\centering
\includegraphics[width=.74\textwidth]{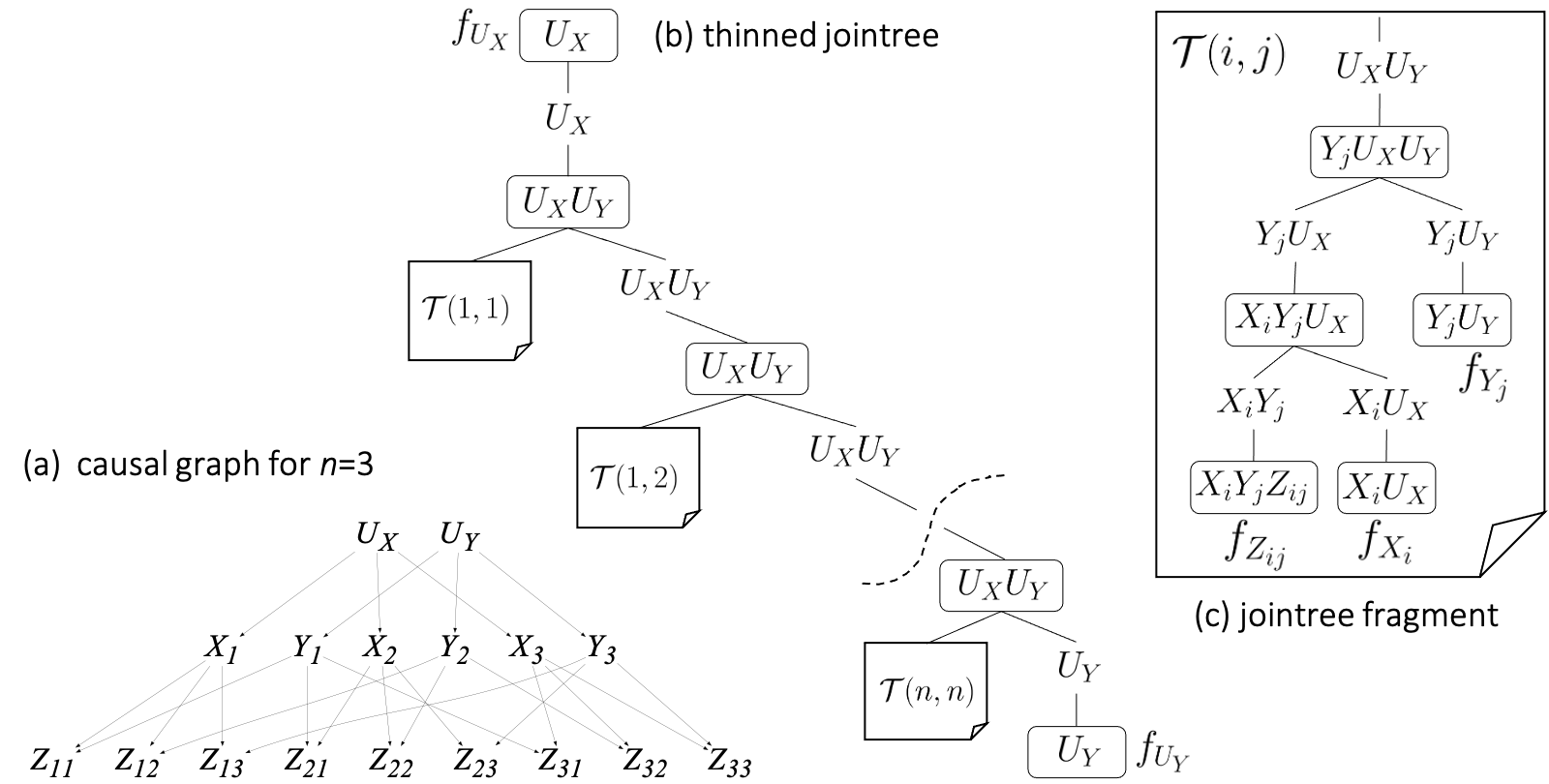}
\caption{A causal graph and its thinned jointree. Mechanisms \(f_{X_i}\) and \(f_{Y_j}\) are replicated \(n\) times.
\label{fig:ctw}}
\end{figure}

\textbf{Causal Treewidth\ \ }
The attained reduction in complexity depends on 
(1) the number of replicas for each mechanism;
(2) the used binary jointree;
and
(3) how the jointree is thinned.
Corresponding heuristics were proposed in~\cite{DarwicheECAI20b} and the
resulting algorithm was shown to yield exponential reductions in the size of compiled
circuits on a number of benchmarks (elimination orders and jointrees are also constructed using
heuristics since finding optimal ones is NP-hard). 
This motivates a new measure of complexity which we call
the {\em causal treewidth.} We define this as the smallest width attained by any thinned 
jointree for a given causal graph. We next complement the empirical findings in~\cite{DarwicheECAI20b}
by showing that causal treewidth dominates treewidth and can be bounded when the treewidth is not. 

\begin{thm}\label{theo:ctw}
The causal treewidth is no greater than treewidth. Moreover,
there is a family of causal graphs with \(n^2+2n+1\) variables, treewidth \(n+1\) and causal treewidth \(2\)
where \(n\) is an integer \(\geq 1\).
\end{thm}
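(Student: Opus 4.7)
The plan is to prove the two claims separately, relying only on the definitions of thinned jointrees and on Theorems~\ref{theo:ve2} and~\ref{theo:ve3}.

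\textbf{The inequality.} The first part follows immediately from the monotonicity of thinning. A binary jointree in which no separator has been shrunk is trivially a thinned jointree, so the family of thinned jointrees of a causal graph contains every binary jointree of that graph. Hence the minimum width over the larger family---the causal treewidth---is at most the minimum width over the smaller family---the treewidth.

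\textbf{The separating family.} I would take the family depicted in Figure~\ref{fig:ctw}: the exogenous root $U_X$ (and $U_Y$) feeding respectively into $X_1,\ldots,X_n$ and $Y_1,\ldots,Y_n$, together with $n^2$ grid nodes $Z_{ij}$ each of which has $X_i$ and $Y_j$ among its parents. The variable count follows by inspection. The lower bound of $n+1$ on the treewidth is already asserted in Section~\ref{sec:ci}; for the matching upper bound I would supply an explicit elimination order of width $n+1$, e.g.\ one that eliminates the $Z_{ij}$'s row-by-row while carrying $X_1,\ldots,X_n$ and one $Y_j$ through the active set. The causal treewidth lower bound of $2$ is also straightforward: every leaf cluster equals the variables of its factor, and each mechanism $f_{Z_{ij}}(Z_{ij},X_i,Y_j)$ already has three variables, so no thinning can reduce that leaf cluster below size $3$, forcing width $\geq 2$.

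\textbf{The causal treewidth upper bound.} This is where the work lies: I need an explicit thinned binary jointree of width $2$. Using Theorem~\ref{theo:ve3}, I place $n$ replicas of each mechanism $f_{X_i}$ (one for every column $j$) and $n$ replicas of each $f_{Y_j}$ (one for every row $i$). I arrange the jointree so that each $Z_{ij}$-leaf (cluster $\{X_i,Y_j,Z_{ij}\}$) is first merged with replicas of $f_{X_i}$ and $f_{Y_j}$ sitting at sibling leaves. At this merge, Theorem~\ref{theo:ve2} allows $X_i$ and $Y_j$ to be summed out locally, so the separator propagated upward carries only $\{U_X\}$ and $\{U_Y\}$. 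These ``column subtrees'' are then combined in a balanced binary fashion beneath the $U_X$- and $U_Y$-leaves, and each such internal merge joins two separators of the form $\{U_X\}$ or $\{U_Y\}$, yielding a cluster of size at most $2$. Every cluster of the resulting thinned jointree therefore has size $\leq 3$, giving width $2$.

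\textbf{Main obstacle.} The hard part is precisely the combinatorial bookkeeping of the previous paragraph: specifying, in a binary jointree, how the $n^2$ replicas of each exogenous mechanism are paired with the $Z_{ij}$-leaves, and verifying that each separator shrinkage is licensed by a genuine pair of shared replicas (Theorem~\ref{theo:ve2}) rather than a phantom one. Conceptually every step is local, but the inductive description of the tree and the per-node verification of cluster size are what the full proof must make precise. Once the construction and the shrinkage justifications are written out, the bound ``width $= 2$'' is an immediate cluster-count check.
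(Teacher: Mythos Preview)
Your overall strategy matches the paper's proof: same family $\cg_n$, same replication of the $f_{X_i}$ and $f_{Y_j}$ mechanisms ($n$ copies each), same per-cell fragment $\{f_{Z_{ij}},f_{X_i},f_{Y_j}\}$ that the paper calls ${\cal T}(i,j)$, and the same lower-bound observation that the leaf cluster of $f_{Z_{ij}}$ already has three variables. So the architecture is right.

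There are, however, two concrete gaps. First, you cannot import the treewidth lower bound from Section~\ref{sec:ci}: that section asserts treewidth $\geq n+1$ for the \emph{chained} variant $\cg'_n$, and since $\cg_n$ is a subgraph of $\cg'_n$, the inequality goes the wrong way for your purposes. The paper establishes the lower bound for $\cg_n$ directly, via the moral graph: each $Z_{ij}$ is simplicial, and after eliminating all $Z_{ij}$ one is left with $K_{n,n}$ on $\{X_i\}\times\{Y_j\}$ plus $U_X,U_Y$, from which any elimination order produces a clique of size $n+1$. You should plan to supply this argument rather than cite Section~\ref{sec:ci}. Second, your description of the upper levels of the thinned jointree is inconsistent: after summing out $X_i$ and $Y_j$ locally, each cell-subtree's outgoing separator is $\{U_X,U_Y\}$ together, not ``$\{U_X\}$ or $\{U_Y\}$'' separately, so the internal merges above join two copies of $\{U_X,U_Y\}$ (cluster size $2$), and the single leaves for $f_{U_X}$ and $f_{U_Y}$ are attached at the very top. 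This still yields width $2$, but the picture you sketched---separate column subtrees routed under distinct $U_X$- and $U_Y$-leaves---does not match what the replication/thinning actually licenses. The paper's cascaded arrangement of the ${\cal T}(i,j)$ fragments in Figure~\ref{fig:ctw}(b) is the clean way to write this down.
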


\begin{proof}[Proof Sketch]
Consider a causal graph \(\cg\) with treewidth~\(w\). Without replicating mechanisms, we
can always get a (thinned) jointree with width \(w\). Hence, the causal treewidth of \(\cg\) is~\(\leq w\).
To show the second part of the theorem,
consider the family of causal graphs \(\cg_n\) with exogenous variables \(U_X\), \(U_Y\) and endogenous variables 
\(X_i\), \(Y_j\), \(Z_{ij}\) for \(i, j=1, \ldots, n\) (\(2+2n+n^2\) variables), and edges
\(U_X \rightarrow X_i\), \(U_Y \rightarrow Y_j\), \(X_i \rightarrow Z_{ij}\), \(Y_j \rightarrow Z_{ij}\).
Figure~\ref{fig:ctw}(a) depicts \(\cg_3\).
We next show that \(\cg_n\) has treewidth \(n+1\) based on standard 
techniques for treewidth; see, e.g.,~\cite[Chapter 9]{Darwiche09}. 
The moral graph of \(\cg_n\) is obtained by dropping edge directions and
connecting every pair of nodes \(X_i\) and \(Y_j\) by an undirected edge. 
Each \(Z_{ij}\) has only two (connected) neighbors in the moral graph (\(X_i\) and \(Y_j\))
so it is a simplicial node. Hence, there must exist an optimal elimination order that starts with
nodes \(Z_{ij}\)~\cite[Section 9.3.2]{Darwiche09}.
After eliminating all \(Z_{ij}\), nodes \(U_X\) and \(U_Y\) will each have \(n\) neighbors,
and nodes \(X_i\) and \(Y_j\) will each have \(n+1\) neighbors.
A simple argument shows that eliminating these variables in any order from the moral graph will create
a clique over \(n+1\) variables so the treewidth is \(\geq n+1\). One can easily verify that 
the elimination order \(Z_{11},\ldots,Z_{nn},U_X,Y_1,\ldots,Y_n,U_Y,X_1,\ldots,X_n\)
has width \(n+1\) so the treewidth of \(\cg_n\) is \(n+1\).
Figure~\ref{fig:ctw}(b) depicts a thinned jointree for \(\cg_n\) with width \(2\), 
which results from cascading \(n^2\) instances of the jointree fragment \({\cal T}(i,j)\) in Figure~\ref{fig:ctw}(c). 
This fragment contains the mechanism for \(Z_{ij}\) and replicas of the mechanisms for \(X_i\) and \(Y_j\). 
This thinned jointree is optimal since the mechanism for \(Z_{ij}\) contains \(3\) variables
so any thinned jointree must have a cluster of size \(\geq 3\). Hence, 
the causal treewidth of \(\cg_n\) is \(2\).
\end{proof}
Theorem~\ref{theo:ctw} effectively says that circuits compiled by \VEC\ are no larger than those compiled by \VE\
and can be exponentially smaller. We finally note that a variation \(\cg'_n\) on \(\cg_n\) was shown in
Section~\ref{sec:ci} with additional edges between variables \(Z_{ij}\). The treewidth of \(\cg'_n\) 
must be \(\geq n+1\) yet has a thinned jointree of width \(4\) (constructed by 
the algorithm in~\cite{DarwicheECAI20b}) so its causal treewidth is \(\leq 4\).
Recall that for this family of causal graphs \(\cg'_n\), the causal effect of \(X_1\) on \(Z_{nn}\) has the only back-door
\(X_2, \ldots, X_n\) so it has a back-door formula with a sum that is exponential in~\(n\) and
 a circuit of size \(O(n^2)\).

\section{Conclusion}
\label{sec:conclusion}

We discussed recent techniques that can exploit causal mechanisms computationally
without having to know their identities, which is the classical setup in causal inference. We also showed how
one can use these techniques to compile non-parametric causal graphs into circuits that can be used to estimate parameters from
data and to perform cross-layer causal inference in time linear in the circuit size. 
Our aim was to provide an intuitive exposure to these techniques to a causality audience who may not be
as familiar with them, with the hope that this may lead to a synthesis on how tractable arithmetic circuits
can aid causal inference in reaching higher levels of scalability and versatility. 

\textbf{Acknowledgements\ \ }
I wish to thank Elias Bareinboim, Yizuo Chen, Scott Mueller, Judea Pearl and Jin Tian for useful discussions and feedback.
This work has been partially supported by NSF grant \#ISS-1910317 and ONR grant \#N00014-18-1-2561.

\bibliographystyle{plain}
%\bibliography{bib/adnan,bib/references,bib/references2,bib/refs,bib/refsnn}

\begin{thebibliography}{10}

\bibitem{ijcai/AgrawalPM21}
Durgesh Agrawal, Yash Pote, and Kuldeep~S. Meel.
\newblock Partition function estimation: {A} quantitative study.
\newblock In {\em {IJCAI}}, pages 4276--4285. ijcai.org, 2021.

\bibitem{Bareinboim20211OP}
E.~Bareinboim, Juan~David Correa, D.~Ibeling, and Thomas~F. Icard.
\newblock On {Pearl}'s hierarchy and the foundations of causal inference.
\newblock 2021.
\newblock Technical Report, R-60, Colombia University.

\bibitem{pnas/BareinboimP16}
Elias Bareinboim and Judea Pearl.
\newblock Causal inference and the data-fusion problem.
\newblock {\em Proc. Natl. Acad. Sci. {USA}}, 113(27):7345--7352, 2016.

\bibitem{DBLP:journals/corr/abs-1302-3562}
Craig Boutilier, Nir Friedman, Mois{\'{e}}s Goldszmidt, and Daphne Koller.
\newblock Context-specific independence in {B}ayesian networks.
\newblock {\em CoRR}, abs/1302.3562, 2013.

\bibitem{ijcai/ChaviraD05}
Mark Chavira and Adnan Darwiche.
\newblock Compiling {B}ayesian networks with local structure.
\newblock In Leslie~Pack Kaelbling and Alessandro Saffiotti, editors, {\em
  IJCAI-05, Proceedings of the Nineteenth International Joint Conference on
  Artificial Intelligence, Edinburgh, Scotland, UK, July 30 - August 5, 2005},
  pages 1306--1312. Professional Book Center, 2005.

\bibitem{Chavira.Darwiche.Ijcai.2007}
Mark Chavira and Adnan Darwiche.
\newblock Compiling {B}ayesian networks using variable elimination.
\newblock In {\em Proceedings of the 20th International Joint Conference on
  Artificial Intelligence (IJCAI)}, pages 2443--2449, 2007.

\bibitem{ai/ChaviraD08}
Mark Chavira and Adnan Darwiche.
\newblock On probabilistic inference by weighted model counting.
\newblock {\em Artif. Intell.}, 172(6-7):772--799, 2008.

\bibitem{ijar/ChaviraDJ06}
Mark Chavira, Adnan Darwiche, and Manfred Jaeger.
\newblock Compiling relational {B}ayesian networks for exact inference.
\newblock {\em Int. J. Approx. Reason.}, 42(1-2):4--20, 2006.

\bibitem{DBLP:PGM20a}
Yizuo Chen, Arthur Choi, and Adnan Darwiche.
\newblock Supervised learning with background knowledge.
\newblock In {\em {PGM}}, 2020.

\bibitem{ChoiDarwiche17}
Arthur Choi and Adnan Darwiche.
\newblock On relaxing determinism in arithmetic circuits.
\newblock In {\em Proceedings of the Thirty-Fourth International Conference on
  Machine Learning (ICML)}, pages 825--833, 2017.

\bibitem{ecsqaru/ChoiKD13}
Arthur Choi, Doga Kisa, and Adnan Darwiche.
\newblock Compiling probabilistic graphical models using sentential decision
  diagrams.
\newblock In {\em {ECSQARU}}, volume 7958 of {\em Lecture Notes in Computer
  Science}, pages 121--132. Springer, 2013.

\bibitem{kr/Darwiche02}
Adnan Darwiche.
\newblock A logical approach to factoring belief networks.
\newblock In Dieter Fensel, Fausto Giunchiglia, Deborah~L. McGuinness, and
  Mary{-}Anne Williams, editors, {\em Proceedings of the Eights International
  Conference on Principles and Knowledge Representation and Reasoning (KR-02),
  Toulouse, France, April 22-25, 2002}, pages 409--420. Morgan Kaufmann, 2002.

\bibitem{DarwicheJACM03}
Adnan Darwiche.
\newblock A differential approach to inference in {B}ayesian networks.
\newblock {\em J. {ACM}}, 50(3):280--305, 2003.

\bibitem{Darwiche09}
Adnan Darwiche.
\newblock {\em Modeling and Reasoning with {B}ayesian Networks}.
\newblock Cambridge University Press, 2009.

\bibitem{DarwicheECAI20b}
Adnan Darwiche.
\newblock An advance on variable elimination with applications to tensor-based
  computation.
\newblock In {\em {ECAI}}, volume 325 of {\em Frontiers in Artificial
  Intelligence and Applications}, pages 2559--2568. {IOS} Press, 2020.

\bibitem{neusys22}
Adnan Darwiche.
\newblock Tractable {B}oolean and arithmetic circuits.
\newblock In Pascal Hitzler and Md~Kamruzzaman Sarker, editors, {\em
  Neuro-symbolic Artificial Intelligence: The State of the Art}. Frontiers in
  Artificial Intelligence and Applications. IOS Press, 2022.
\newblock In print.

\bibitem{dechterUAI96}
Rina Dechter.
\newblock Bucket elimination: A unifying framework for probabilistic inference.
\newblock In {\em Proceedings of the Twelfth Annual Conference on Uncertainty
  in Artificial Intelligence (UAI)}, pages 211--219, 1996.

\bibitem{uai/DilkasB21}
Paulius Dilkas and Vaishak Belle.
\newblock Weighted model counting with conditional weights for {B}ayesian
  networks.
\newblock In {\em {UAI}}, 2021.

\bibitem{jair/Halpern00}
Joseph~Y. Halpern.
\newblock Axiomatizing causal reasoning.
\newblock {\em J. Artif. Intell. Res.}, 12:317--337, 2000.

\bibitem{aaai/HuangV06}
Yimin Huang and Marco Valtorta.
\newblock Identifiability in causal bayesian networks: {A} sound and complete
  algorithm.
\newblock In {\em {AAAI}}, pages 1149--1154. {AAAI} Press, 2006.

\bibitem{primer16}
Madelyn~Glymour Judea~Pearl and Nicholas~P. Jewell.
\newblock {\em Causal Inference in Statistics: A Primer}.
\newblock Wiley, 2016.

\bibitem{uai/LeeCB19}
Sanghack Lee, Juan~D. Correa, and Elias Bareinboim.
\newblock General identifiability with arbitrary surrogate experiments.
\newblock In {\em {UAI}}, volume 115 of {\em Proceedings of Machine Learning
  Research}, pages 389--398. {AUAI} Press, 2019.

\bibitem{LowdD08}
Daniel Lowd and Pedro~M. Domingos.
\newblock Learning arithmetic circuits.
\newblock In {\em Proceedings of the 24th Conference in Uncertainty in
  Artificial Intelligence (UAI)}, pages 383--392, 2008.

\bibitem{backdoor}
Judea Pearl.
\newblock [bayesian analysis in expert systems]: Comment: Graphical models,
  causality and intervention.
\newblock {\em Statistical Science}, 8(3):266--269, 1993.

\bibitem{do-calculus-95}
Judea Pearl.
\newblock Causal diagrams for empirical research.
\newblock {\em Biometrika}, 82(4):669--688, 1995.

\bibitem{pearl00b}
Judea Pearl.
\newblock {\em Causality}.
\newblock Cambridge University Press, 2000.

\bibitem{DBLP:journals/cacm/Pearl19}
Judea Pearl.
\newblock The seven tools of causal inference, with reflections on machine
  learning.
\newblock {\em Commun. {ACM}}, 62(3):54--60, 2019.

\bibitem{pearl18}
Judea Pearl and Dana Mackenzie.
\newblock {\em The Book of Why: The New Science of Cause and Effect}.
\newblock Basic Books, 2018.

\bibitem{uai/PearlR95}
Judea Pearl and James~M. Robins.
\newblock Probabilistic evaluation of sequential plans from causal models with
  hidden variables.
\newblock In Philippe Besnard and Steve Hanks, editors, {\em {UAI} '95:
  Proceedings of the Eleventh Annual Conference on Uncertainty in Artificial
  Intelligence, Montreal, Quebec, Canada, August 18-20, 1995}, pages 444--453.
  Morgan Kaufmann, 1995.

\bibitem{PoonD11}
Hoifung Poon and Pedro~M. Domingos.
\newblock Sum-product networks: {A} new deep architecture.
\newblock In {\em {UAI}}, pages 337--346. {AUAI} Press, 2011.

\bibitem{ijcai/Qin15}
Biao Qin.
\newblock Differential semantics of intervention in {B}ayesian networks.
\newblock In {\em {IJCAI}}, pages 710--716. {AAAI} Press, 2015.

\bibitem{ShenCD16}
Yujia Shen, Arthur Choi, and Adnan Darwiche.
\newblock Tractable operations for arithmetic circuits of probabilistic models.
\newblock In {\em {NIPS}}, pages 3936--3944, 2016.

\bibitem{DBLP:conf/uai/Shenoy96}
Prakash~P. Shenoy.
\newblock Binary join trees.
\newblock In {\em {UAI}}, pages 492--499. Morgan Kaufmann, 1996.

\bibitem{docalculus}
Ilya Shpitser and Judea Pearl.
\newblock Identification of joint interventional distributions in recursive
  semi-markovian causal models.
\newblock In {\em {AAAI}}, pages 1219--1226. {AAAI} Press, 2006.

\bibitem{aaai/TianP02}
Jin Tian and Judea Pearl.
\newblock A general identification condition for causal effects.
\newblock In {\em {AAAI/IAAI}}, pages 567--573. {AAAI} Press / The {MIT} Press,
  2002.

\bibitem{nips/TikkaHK19}
Santtu Tikka, Antti Hyttinen, and Juha Karvanen.
\newblock Identifying causal effects via context-specific independence
  relations.
\newblock In {\em NeurIPS}, pages 2800--2810, 2019.

\bibitem{ijcai/WangLK21}
Benjie Wang, Clare Lyle, and Marta Kwiatkowska.
\newblock Provable guarantees on the robustness of decision rules to causal
  interventions.
\newblock In {\em {IJCAI}}, pages 4258--4265. ijcai.org, 2021.

\bibitem{zhangJAIR96a}
Nevin~Lianwen Zhang and David Poole.
\newblock Exploiting causal independence in bayesian network inference.
\newblock {\em Journal of Artificial Intelligence Research}, 5:301--328, 1996.

\end{thebibliography}

\end{document}